\documentclass{article}
\usepackage{kr}

\usepackage{times}
\usepackage{soul}
\usepackage{url}
\usepackage[hidelinks]{hyperref}
\usepackage[utf8]{inputenc}
\usepackage[small]{caption}
\usepackage{graphicx}
\usepackage{amsmath}
\usepackage{amsthm}
\usepackage{booktabs}
\usepackage{algorithm}
\usepackage{ifthen}
\usepackage[noend]{algpseudocode}
\usepackage{color}
\usepackage{tikz}
\usetikzlibrary{arrows.meta}

\newcounter{prithreshold}
\newcommand{\prio}[2]
{%
	\ifthenelse{\value{prithreshold} < #1}
	{%
		\textbf{[prio=#1]}
		#2
	}%
	{}%
}

\newtheorem{definition}{Definition}
\newtheorem{theorem}[definition]{Theorem}

\usepackage{xspace}

\newcommand{\sift}{\textnormal{\textsc{sift}}\xspace}
\newcommand{\siftp}{\textnormal{\textsc{sift+}}\xspace}
\newcommand{\synth}{\textnormal{\textsc{synth}}\xspace}
\newcommand{\synthp}{\textnormal{\textsc{synth+}}\xspace}
\newcommand{\localsynthp}{\textnormal{\textsc{local synth+}}\xspace}
\newcommand{\strips}{\textnormal{\textsc{strips}}\xspace}
\newcommand{\stripsp}{\textnormal{\textsc{strips+}}\xspace}

\newcommand{\Omit}[1]{}
\newcommand{\tup}[1]{\langle #1 \rangle}

\newcommand{\citeay}[1]{\citeauthor{#1} (\citeyear{#1})}

\DeclareMathOperator\seconds{s}
\DeclareMathOperator\atB{atB}
\DeclareMathOperator\atT{atT}
\DeclareMathOperator\leftof{leftof}
\DeclareMathOperator\belowof{belowof}
\DeclareMathOperator\hold{hold}
\DeclareMathOperator\holding{holding}
\DeclareMathOperator\adjacent{adjacent}
\DeclareMathOperator\atC{at}
\DeclareMathOperator\atP{atP}
\DeclareMathOperator\Max{\textit{max}}
\DeclareMathOperator\move{\textit{move}}

\DeclareMathOperator\moveto{\textit{moveto}}
\DeclareMathOperator\pick{\textit{pick}}
\DeclareMathOperator\drop{\textit{drop}}
\DeclareMathOperator\stack{\textit{stack}}
\DeclareMathOperator\unstack{\textit{unstack}}
\DeclareMathOperator\smove{\textit{m}}

\DeclareMathOperator\spick{\textit{p}}
\DeclareMathOperator\sdrop{\textit{d}}
\DeclareMathOperator\sstack{\textit{s}}
\DeclareMathOperator\sunstack{\textit{u}}

\DeclareMathOperator\umove{\textit{\underline{m}ove}}

\DeclareMathOperator\upick{\textit{\underline{p}ick}}
\DeclareMathOperator\udrop{\textit{\underline{d}rop}}
\DeclareMathOperator\ustack{\textit{\underline{s}tack}}
\DeclareMathOperator\uunstack{\textit{\underline{u}nstack}}
\DeclareMathOperator\init{\textit{Init}}
\DeclareMathOperator\Right{\textit{right}}
\DeclareMathOperator\Up{\textit{up}}

\DeclareMathOperator\Prob{\mathit{P}}
\DeclareMathOperator\Dom{\mathcal{D}}
\DeclareMathOperator\Pred{\mathcal{P}}
\DeclareMathOperator\A{\mathcal{A}}
\DeclareMathOperator\Prec{Pre}

\DeclareMathOperator\observation{{\omega}}
\title{Learning Lifted Action Models from Traces with\\Minimal
	Information About Actions and States}

\usepackage{graphicx}
\usepackage{subcaption}

\author{
    Jonas Gösgens$^1$\and
    Niklas Jansen$^1$\and
    Hector Geffner$^1$\\
    \affiliations
    $^1$RWTH Aachen University\\
    \emails
    \{jonas.goesgens, niklas.jansen, hector.geffner\}@ml.rwth-aachen.de
 }

\begin{document}

\maketitle

\begin{abstract}
It has been recently shown that lifted \strips models can be learned correctly and efficiently
from action traces alone; i.e.,  applicable action sequences from a hidden \strips model.
The result is remarkable because the states are not assumed to be observable at all, 
and yet it is not practical enough as \strips  actions include arguments that are not
needed for selecting the actions.  This shortcoming has  been addressed
by assuming that the action traces come instead from a hidden \stripsp model
where some action arguments are implicit in the hidden action preconditions.
A limitation of this approach, however, is that it assumes that
the states are fully observable. In this work, we relax these restrictions
and consider the problem of learning \stripsp  action domains from traces
in a more general context  where the traces carry partial information about
both actions and states. In particular, we formulate algorithms and completeness
results for three general cases, all of which assume full observability
of selected action arguments. In the first case, no  observability of the state is assumed; 
in the second case, full observability of some state predicates is assumed,
and in the third case, local observability of some state predicates is assumed instead.
Given a \stripsp  domain, these results characterize the conditions under which an equivalent domain
can be learned from traces.  Experimental results are reported.

\end{abstract}


\section{Introduction}

Recently, it  has been shown that lifted \strips models can be learned correctly and efficiently
from applicable action sequences  (action traces) drawn from a hidden \strips model,
via a simple algorithm called \sift \cite{sift}. The result, which builds on earlier work \cite{locm,locm2},
is particularly interesting as it implies that the domain  predicates can be learned accurately
and effectively from action traces alone,  without any restrictions,  and without assuming state
observability.\footnote{The only assumption is that the actions are well-formed,
meaning that they do not add atoms that are already true, or delete atoms that are already false.}

Still, the result is not practical enough, and it does not provide a \emph{lifted alternative
to model-based reinforcement learning}  approaches that are also aimed at learning dynamic
models from traces  \cite{sutton:book}. The reason is that
\strips action traces are not natural  and often presume knowledge of the hidden
domain that is to be learned. For example, in order to learn a domain like the sliding-tile
puzzles, 
the \strips actions   take as  arguments the tile to be moved,  its current position,
and the next position. Similarly, in the Blocksworld, an action like $unstack(x,y)$  has to mention
not only the block to be unstacked but also the block that is below it.
Many of these action arguments, however,  are not needed for selecting and
identifying the action to be done, but for modeling  the action in \strips.

The problem of learning action models in more expressive action languages
that do not require as many parameters has not received as much attention,
and a recent approach moves away from \strips to a simple  extension called
\stripsp,  where action arguments can be conveyed explicitly,  as usual,
or implicitly, as variables that bind to unique objects in the action preconditions \cite{synth}.
Indeed, the  first version of PDDL accommodated such \emph{implicit action arguments}
through the  \textsc{:vars} keyword  \cite{pddl}.
In \stripsp, the action of sliding  a tile to the left can be expressed indeed without any arguments.
The  precondition containing the lifted atoms $\atB(z_1)$, $\leftof(z_2,z_1)$, and $\atT(z_3,z_2)$,
encoding the position of the blank, the corresponding adjacency relation, and the tile to be moved and its position,
defines the action arguments implicitly by the unique objects that can bind to the $z_i$ variables in a state.

A limitation of the resulting algorithm,  called \synth  \cite{synth}, 
is that it assumes that the  states are \emph{fully observable}, which means that all
the predicates are given, and none has to be ``invented''.  In a way,
the \sift algorithm  ``invents'' predicates from full \strips action traces,
while   \synth deals with less informed \stripsp action traces but  assumes
that the predicates are given.  In this work, we build on these approaches and algorithms
to investigate what partial information is needed on both actions and states in the traces
in order to learn a domain.  In particular, we formulate algorithms and completeness
results for three general cases, all of which assume full observability
of the \stripsp actions in the traces.  In the first case, no  observability of the state
is assumed; in the second case, full observability of some state predicates is assumed; 
and in the third case, local observability of some state predicates is assumed instead.
Given a \stripsp  domain,  one can then determine exact conditions under which an equivalent
domain will be learned from traces. For example, the \stripsp domain for the
sliding-tile puzzle with no  action arguments  can be learned 
provided that the $\atB$ predicate encoding the blank position is \emph{fully observed},
and the  $\atT$ relation encoding the tile position is  \emph{locally observable.}
The resulting domain  learning algorithm, called \synthp, is an extension and
generalization of both \sift and \synth. 

The results are different from those that allow some observations in \strips traces
to be missing or corrupted by noise.
In our case, a non-observed predicate is never observed, and a locally observed predicate
defines precisely the observed atoms. From these crisp definitions,
we will  establish the scope of the model learning algorithms and establish
their correctness and completeness conditions.

The paper is organized as follows. We review  next related work and relevant background,
then we introduce the new notions, the new algorithms, and their properties. Some examples
are considered along the way and experiments are also  reported.

\section{Preview}

In the Delivery domain, there are packages spread in a grid that an agent must deliver to some target cell.
In \strips, the domain can be modeled with three action schemas

\begin{equation}
	\move(c,c'), \ \pick(o,c), \ \drop(o,c) \ ,
	\label{eq:1}
	\vspace*{.2cm}
\end{equation}

\noindent the first representing the   agent moving from a cell to an adjacent cell, the second, picking up an object from a cell, and the
third, dropping it on a cell. The \sift algorithm can learn an equivalent \strips model from action traces
made of ground instances of these three schemas. In this paper, we introduce algorithms and the corresponding theorems
for learning from a broader class of traces. For example, without assuming that the state is observable,
the \siftp algorithm will learn from \stripsp traces from instances of the actions
\begin{equation}
	\move(c'), \ \pick(o), \ \drop() \ ,
	\label{eq:2}
	\vspace*{.2cm}
\end{equation}

\noindent where some of the \strips action arguments that are not needed for selecting the actions
are omitted. This will be achieved by learning the ``mutex'' predicate $\atC(c)$ first, true for a single cell $c$ in any state,
encoding the agent position, and then learning from it, the ``mutex'' predicates $\atP(o,c)$, true for a single cell $c$ per object,
and $\hold(o)$, true for a single object $o$ at most.
At the same time, \synthp will learn from ground traces of  action schemas  like

\begin{equation}
	\Right(), \ \Up(), \ \ldots, \ \pick(o), \ \drop() \ .
	\label{eq:3}
	\vspace*{.2cm}
\end{equation}
\noindent provided that the predicate $at(c)$ is fully observed, and that the two directional adjacency relations
(e.g., $\leftof$ and $\belowof$) are  observed. The two  other relations needed,  $\atP(o,c)$ and $\hold(o)$,  will be learned.
Finally, in RL, it is  common to consider  actions with no arguments at all  like \cite{babyAI}
\begin{equation}
	\Right(), \ \Up(), \ \ldots, \ \pick(), \ \drop() \ .
	\label{eq:4}
	\vspace*{.2cm}
\end{equation}
\noindent The corresponding action models will  be learned in this case provided that, in addition, 
there cannot be more than one object in a cell, and that the predicate $\atP(o,c)$ is \emph{locally observed},
meaning that in this case only the  true $\atP(o,c)$ atoms for the cell $c$ for which $\atC(c)$ is true will be observed.
The  predicate $\hold(o)$ will   be learned.

While neither    \sift nor   \synth can learn the action models in the last three scenarios,  \siftp will learn the correct model in  the first of them, 
and  \synthp, in all of them.

\section{Related Work}

While many  languages have been developed for representing lifted dynamic models in logical form, the work on learning these models
has  been  focused mostly on \strips. 

\medskip

\noindent \textbf{Learning models  from actions.}
The \textsc{locm}  system \cite{locm1,locm}  accepts action traces as inputs,  and outputs lifted domain descriptions,
but it is a heuristic algorithm and its scope is not clear. The \sift algorithm   uses full  \strips action traces
and has been shown to be sound, complete, and scalable \cite{sift}.
A SAT-based  approach to lifted model learning has been developed as well, 
but the approach is not scalable \cite{bonet:ecai2020,ivan:kr2021}.

\medskip

\noindent \textbf{Learning models from states and actions.} The problem of learning lifted \strips models from state-action traces
has  received more attention  \cite{zhuo2013action,aineto2019learning,lamanna2021online,verma2021asking,macq:muise,le2024learning,bachor2024learning,xi2024neuro,aineto2024action}. While observability of the states can be  partial or  noisy, 
in almost all cases the observations reveal all the domain predicates and their arities, and the \strips actions reveal all the
arguments.  In contrast, \citeay{balyo:2024} introduce a SAT-based learning
formulation where only the action names are observed, 
while \citeay{paolo:2025} 
deals with  partially observable states and actions
although it is not complete, as it cannot  learn preconditions over action arguments
that are not observable and do not appear in the effects.  The \synth algorithm \cite{synth}, that   learns \strips models from \stripsp traces
combining actions and states, accounts  for such  missing action  arguments using  preconditions with free variables.

 \medskip

\noindent \textbf{Model-based RL.}   Model-based reinforcement learning algorithms learn controllers by also learning (stochastic) models, 
without making  assumptions about the structure of them \cite{sutton:book}. In the tabular setting, they  result in  flat state models
with  state transition probabilities obtained from simple counts  \cite{brafman:rmax}. In some cases, a first-order state language  is assumed but the state predicates
are given \cite{littman:rmax,leslie:probabilistic}. In more recent approaches, the learned dynamics is not represented compactly in languages such as \strips or
PDDL, but in terms of deep neural networks  \cite{fleuret:atari,dreamer:atari,timofte:atari}. A limitation of these methods, like other recent deep-learning approaches that learn \strips models from state images \cite{asai:latplan,asai:jair}, is that the learned action models are opaque and propositional.

\section{Background}

 We review \strips, \stripsp, and the input traces.

\subsection{\strips}

A classical STRIPS problem is a pair $\Prob=\tup{\Dom,I}$ where $\Dom$ is a first-order
\emph{domain} and $I$ contains information about the instance \cite{geffner:book,ghallab:book}.
The domain $\Dom=\tup{\Pred,\A}$ has a set $\Pred$ of predicate symbols $p$ and a set of action schemas $\A$
with preconditions and effects given in terms of  atoms $p(x_1, \ldots, x_k)$, where $p$ is a predicate
symbol of arity $k$, and each $x_i$ is an argument of the schema.
The instance information is a tuple $I=\tup{O, \init,G}$ where $O$ is a
set of object names $c_i$, and $\init$ and $G$ are sets of \emph{ground atoms} $p(c_1, \ldots, c_k)$
denoting the initial and goal situations. A STRIPS  problem $\Prob=\tup{\Dom,I}$ defines a state graph $G(\Prob)$ where nodes
represent reachable states in $\Prob$, the root node represents the initial state, and edges indicate state transitions labeled
with the actions causing them. A path in this graph represents an action sequence
that is applicable in the state represented by the first node.
\sift  learns domains  expressed in \strips  with \emph{negation} where negative literals
can be used in the initial situation,  action preconditions, and  goals.
The states in such a case are not sets of ground atoms but
sets of ground literals.  Since the goals of an instance $\Prob=\tup{\Dom,I}$  play no role in learning, 
we will regard $I$ as just representing the initial situation.

\subsection{\stripsp}

In order to learn from more natural traces than those resulting from \strips models,
the \synth algorithm learns from a slightly more expressive language called \stripsp.
The main difference between \stripsp and \strips is that the former accommodates
free $z_i$ variables in the action preconditions that must bind uniquely to single
objects in each state.
Formally, action schemas $a(x)$ in \stripsp have   (conjunctive)  preconditions  $\Prec(a(x))=\Phi(x,y,z)$
with free variables  among those of  $x$, $y$, and $z$ which are  pairwise disjoint sets of variables.
The $x_i$ and $z_i$  variables in $x$   and $z$ can appear in action effects, but the variables $y_i$ in $y$
cannot. A ground \stripsp action $a(o)$ is \emph{applicable}  in a state $s$ if its precondition
formula $\Phi(x,y,z)$ is \emph{satisfiable} in $s$ with a grounding that binds $x$ to $o$.
A grounding of   $\Phi(x,y,z)$ in $s$ is an assignment $\sigma$ of variables in the formula
to constants (objects) in the instance. A grounding \emph{satisfies}   $\Phi(x,y,z)$ in $s$ if the
resulting ground atoms are all true in $s$, and the  formula $\Phi(x,y,z)$ is \emph{satisfiable} in $s$
if some grounding of the variables satisfies it. The $z$ variables are \emph{determined}  by the variables
$x$ if the groundings that satisfy the formula must agree on the value (grounding) of $z$ when  they agree on the value of $x$.
The \emph{semantics} of the \stripsp action $a(x)$ with precondition $\Phi(x,y,z)$
is   the semantics of the \strips action $a'(x')$ that has the same preconditions and effects as $a(x)$
but with the $y$ and $z$ variables pushed as explicit arguments in $x'$.  Indeed, this is the way
to map \stripsp models into \strips models. 
As an example, the action $right()$ from  Delivery in (\ref{eq:3}) can be modelled with a precondition $\Phi(x,y,z): at(z_1) \land  rightof(z_1,z_2)$.
This precondition uniquely determines the implicit  arguments $z_1$ and $z_2$  of the action,
since the agent is located in exactly one cell and there is only one cell to its right.

\subsection{Traces}

An \emph{action trace} $a_0, \ldots, a_n$ in a domain instance $\Prob=\tup{\Dom,I}$ is an  action sequence that is applicable from a reachable state in $\Prob$.
An  action trace from $\Dom$ is a trace from an instance of $\Dom$. For each trace, there is  an   initial state $s_0$ and
states $s_{i+1}$ generated by the actions  in the  trace. The trace $s_0, a_0, s_1, a_1 , \ldots, a_{n-1},s_n$ is called a state-action trace. 
In this work  we consider action traces and partially observable state traces over (hidden) \stripsp domains of the form $\observation_0, a_0, \observation_1, a_1, \ldots$ where $\observation_i$ is a partial observation of the hidden state $s_i$.

When learning from action traces or partially observable traces, there is no assumption about whether any pair of hidden states $s_i$ and $s_j$
represents the same state or not. In certain cases, however, this  information is available (e.g., traces drawn from the same state) and
can be used. Action traces extended with such \emph{state equalities} are called
\emph{extended traces}.  It is useful to represent sets of extended traces $T$  as graphs $G_T$. In these graphs, the nodes represent the hidden states,
and two states  \emph{known} to represent the same state  can be  merged into a single node. The edges of the graph $G_T$ are labeled with the
actions mapping one state into the other.
We will refer to both  plain traces and extended traces as traces, and make their difference explicit when relevant.

\section{\sift: Learning from \strips Action Traces}

The \sift algorithm learns lifted  \strips models, including the domain predicates, from
\strips action traces, under the assumption that the hidden model is \emph{well-formed},
in the sense that it does not add atoms that are already true, nor deletes atoms that are already
false \cite{sift}. The same assumption  is made in \synth. The key idea in \sift is
that the domain predicates $p$ in a hidden \strips domain $\Dom$ can be represented in a suitable manner
as \emph{features}, and they can be recovered  from action traces $T$  alone in time that is linear in $|T|$.
For example, the atoms  $\atC(c)$ representing the position of the agent in a grid
are  affected only by the lifted actions  $\move(x,x')$ when $x'=c$ or $x=c$.
In the first case, $\atC(c)$ becomes true; in the second case, it becomes false.
The $\atC$ predicate is represented indeed by the feature $f_{\atC}=\tup{1,\{\move[1],\move[2]\}}$
of arity $1$ which takes one argument $c$, and which is affected only by  actions $\move$,
when $c$ is the first or second argument of $\move$. Otherwise, the atom $\atC(c)$
is not affected. More generally, a feature $f$ is a pair $f=\tup{k,B}$ where $k$ is the arity
of the feature, and $B$ is a set of action patterns $a[t]$ that affect $f$, where $a$
is an action name in $\Dom$ and  $t$ is a tuple $t=[t_1, \ldots, t_k]$ of $k$ indices $t_i$
that range over the argument indices of the action $a$. An action pattern $a[t]$ in $B$
says that the lifted   atom $f(x_{t_1}, \ldots, x_{t_k})$ is an effect of the
action $a(x_1, \ldots, x_{n_a})$. 

A feature $f=\tup{k,B}$  represents a hypothetical domain predicate and the ways in
which the actions affect it. The feature represents an actual domain predicate
if it is \emph{consistent} with the domain traces, and this consistency check
reduces to a fast 2-CNF consistency test \cite{sift}. The consistent 
features $f=\tup{k,B}$ are transformed  into consistent \emph{signed
features} $f'=\tup{k,A,D}$ where the set of action patterns $B$ is split
into add patterns in $A$, and  delete patterns in $D$, for $B=A\ \dot{\cup}\ D$.
\sift generates all possible action patterns $a[t]$ of arities $k=1, \ldots, \Max$,
$t=[t_1, \ldots, t_k]$, where $\Max$ is the max arity of an action in the traces,
and each of the possible features $f=\tup{k,B}$ is checked for consistency
individually.  The  features consistent with the traces encode the action effects
in a learned domain $\Dom_L$ whose preconditions can be inferred from the traces as well.
For a sufficiently rich set of traces $T$, the authors show that the learned domain $\Dom_T$
and the hidden domain $\Dom$ are equivalent.

\section{\synth:  \stripsp State-Action Traces}

\begin{algorithm}[t]
	\caption{Expands $Q^i(x,y,z^i)$ into $Q^{i+1}(x,y,z^{i+1})$ (Jansen, G\"osgens and Geffner 2025)}
	\label{alg:find_expansion}
	\begin{algorithmic}
		\State \textbf{Input:}  $Q(x,y,z^i) = \bigwedge_{j=1}^{i} Q_j(x,y,z^j)$ \Comment{Valid query Q}
		\State \textbf{Input:} AS $ = \{(a(o_i), s_i)\}_{i=1}^n$  \Comment{State-action pairs}
		\State \textbf{Output:} $Q_{i+1}(x,y,z^{i+1})$ \Comment{Valid extension of Q}
		\State
		\Function{Expand}{$Q(x,y,z^i), \text{AS}$}
		\State $Q \gets \{ p(w) \mid p \in P, w\in \{ x, y, z^{i+1} \}$, $z_{i+1} \in w$ $\}$
		\State $Q_0 \gets \{q_d\}$ \Comment{$q_d$ is dummy query $true$}
		\While{$Q_0 \not= \emptyset$}
		\State $Q_{next} \gets \{\}$
		\For{$q \in Q_0$}
		\For{$q' \in (Q \setminus q)$}
		\State $q'' \gets q \land q'$
		\State $\text{res} \gets \textsc{test*}(Q(x,y,z^i), q'', AS)$
		\If{$\text{res} = $ \emph{determined}}
		\State \Return $q''$
		\ElsIf{$\text{res} =$ \emph{valid}}
		\State $Q_{next} \gets Q_{next} \cup q''$
		\EndIf
		\EndFor
		\EndFor
		\State $Q_0  \gets Q_{next}$		
		\EndWhile
		\State \Return $Q^i(x,y,z^i)$  \Comment{It can't be extended further}
		\EndFunction
	\end{algorithmic}
\end{algorithm}

\synth learns from state-action traces drawn from a hidden \stripsp domain $\Dom$,
and for this, it assumes not only that the domain is well-formed and that
the action preconditions $\Phi(x,y,z)$ of action $a(x)$ determine uniquely
the value of the $z$-variables in any state $s$ where a ground instance  $a(o)$ applies
(a requirement in \stripsp), but also that these preconditions are easy to evaluate.
For this, each (existential)  $y_i$ variable in $y$ must occur  only once
in $\Phi(x,y,z)$, and the preconditions $\Phi(x,y,z)$  must be \emph{stratified}
in the following sense: they can be written as a sequence of $Q^i(x,y,z^i)$
expressions called \emph{subqueries}, each including the atoms that
include the variable $z_i$ and no variable $z_j$, $j > i$, $z=\tup{z_1, \ldots, z_n}$.
Moreover, not only must $Q^n(x,y,z^n)$ determine the value of the $z$ variables,
as demanded by \stripsp;  stratification demands that each subquery 
$Q^i(x,y,z^i)$ determines the value of each  $z^i$ variable \cite{synth}.

Under these conditions and with a suitable rich set of state-action traces $T$, \synth learns
a domain $\Dom_T$ that is  equivalent to the hidden domain $\Dom$. The key task  is learning the subqueries $Q^i(x,y,z^i)$, $i=1, \ldots, n$,
that set the values of the $z$-variables uniquely in each state where an instance $a(o)$
of an action $a(x)$ is applied. The procedure is shown in Algorithm~\ref{alg:find_expansion}.
The task is solved greedily, with no loss of completeness,
starting with $i=1$, initializing each subquery $Q^{i+1}(x,y,z^{i+1})$
to $Q^i(x,y,z^i)$, and  extending it  incrementally with lifted atoms $q_{i+1}$ that can involve
$x$ and $y$ variables, and must involve variable $z_{i+1}$ but no $z_j$ variable for $j > i+1$. 
After each extension, the satisfiability of $Q^{i+1}(x,y,z^{i+1})$ is checked in each state $s$ of  the traces where an action $a(o)$
is applied with $x=o$. If for each state $s$, the formula is  satisfiable (\emph{valid} in the code), $q_{i+1}$ is retained
in $Q^{i+1}(x,y,z^{i+1})$, else it is discarded. The step finishes when $Q^{i+1}(x,y,z^{i+1})$
determines the value of $z^{i+1}$ uniquely in all such states (marked as \emph{determined} in the code).
$Q^0(x,y,z^0)$ is empty, and the whole process finishes with $z=\tup{z_1, \ldots, z_n}$, when no other determined
variable $z_{n+1}$ with a denotation  different than the  $z_j$ variables found can be added to $z$.
Provided with the $z_i$ variables and the corresponding precondition queries
$Q^{i}(x,y,z^{i})$, the problem  reduces  to the well-known  problem of learning action
models from full \strips states and  actions, as the objects that bind to the $z_i$ variables
can then be regarded as explicit action arguments.




\subsection{\sift and \synth Revisited}

\sift and \synth  are presented as model learning algorithms: the first learns
\strips models from \strips action traces; the second learns \stripsp models from \stripsp state-action
traces. Yet another view of the core part of both  algorithms is possible. Given relational states $s$
and an action $a(x)$, \synth learns queries $q_i: Q^i(x,y,z^i)$ that bind the  $z_i$ variables to unique objects
in the preconditions of $a(x)$. These queries can be thought as \emph{referring expressions} with unique denotations.
\sift, on the other hand, {``invents''  predicates} (consistent with the traces) over the explicit  action arguments.
A key observation is that it is direct to use \sift  to ``invent'' predicates over implicit action arguments as well.
The only change that is needed is that in  the  action patterns $a[t]$ used to define the features,
the indices $t_i$ of $t$ must be allowed to point to an  explicit action argument $x_i$ or 
to an implicit action argument  $z_l$, whose value is captured by its query $q_l$ in the precondition of action $a$. 

Let us recall that \sift does not  learn the hidden domain $\Dom$
exactly, but with a sufficient rich set of traces, it learns  models $\Dom'$ that are \emph{equivalent} to  $\Dom$
in the following sense:

\begin{definition}
  A domain $\Dom'$ is equivalent to a domain $\Dom$ if positive action traces in $\Dom$ are positive action  traces in $\Dom'$, and negative action  traces in $\Dom$ are negative action
  traces in $\Dom'$, where
\begin{itemize}
\item An action trace $\tau$ is \emph{negative}  if $a_i$ is an action in $\tau$ with precondition $p$ such that
   an  action $a_j$  before $a_i$  deletes $p$, and no action between $a_j$ and $a_i$  adds $p$.
\item An action trace $\tau$ is \emph{positive} if it is not negative. 
\end{itemize}
\end{definition}

Namely, $\Dom$ and $\Dom'$ are equivalent if they accept and reject the same action traces; meaning that a feasible
plan in $\Dom$ must be a feasible  plan in $\Dom'$, and vice versa. This is indeed the definition that is used in \cite{sift} to validate the learned models.
The authors suggest an alternative definition that is  equivalent to this one, that says that $\Dom$ and $\Dom'$ are equivalent if they can be extended
into the so-called maximal domain descriptions $\Dom_{max}$ and $\Dom'_{max}$, compatible with the traces, such that $\Dom_{max}$ and $\Dom'_{max}$ are equal up to predicate
renaming. Note that \emph{static predicates}, namely, those which are not affected by the actions do not play any role in these
definitions as they are not a true property of the domain but of the instances \cite{sift}.




\begin{algorithm}[t]
  \caption{\synthp for learning \stripsp domains. It uses suitable extension of \sift to invent new predicates from explicit and implicit action arguments,
    and \synth\ to generate new queries from them and observable predicates if any}
	\label{fig:synthp}
	\begin{algorithmic}
		\State \textbf{Input:} Labeled graph $G=G_T$ of traces $T$
		\State \textbf{Input:} Obs. predicates $\Pred_{\observation}$ if any; else \synthp=\siftp
		\State \textbf{Output:} Learned \stripsp domain $\mathcal{D}_T$
		\State
		\Function{\synthp}{$G$,$\Pred_{\observation}$}
		\State $G' \leftarrow G$, $Q \leftarrow \emptyset$, $d \leftarrow 1$
		\While{$G'$  updated \textbf{and} $d < d_{\Max}$}
		\State $\Pred_n  \leftarrow$ \sift$(G')$ for normal features
		\State $\Pred_m \leftarrow$ \sift for mutex features$(\Pred_n, G')$.
		\State $\Pred \leftarrow \Pred_{\observation}\cup\Pred_n\cup\Pred_m$ 
		\State $Q \leftarrow Q \ \cup$ \synth$(G',\Pred)$.
		\State $G' \leftarrow$ $update(G',Q,\Pred)$ with new $z_i^a/Q^i$'s 
                \State $d \leftarrow d+1$
		\EndWhile
       		\State $\A \leftarrow $ Effects/precs  from learned features $\Pred_n$, $\Pred_m$
                \State $\A \leftarrow $ Add effects/precs over   from obs. predicates $\Pred_{\observation}$
		\State \Return $\Dom=\tup{\Pred,\A}$
		\EndFunction
	\end{algorithmic}
	\label{alg:synthp}
\end{algorithm}



\section{\siftp and \synthp: Pushing the Envelope}

A simple extension and combination of the ideas in  \sift and \synth  yields  a new   algorithm \synthp which  is more powerful than both.
\synthp learns from state-action traces assuming that a given  subset of predicates $\Pred' \subseteq \Pred$ is observable. If $\Pred'$
is empty, \synthp becomes an extension of \sift, which we call \siftp; if $\Pred'=\Pred$, it is equivalent to \synth. The interesting
and novel cases are thus  when $\Pred' \subset \Pred$. The pseudo-code for \synthp is shown in Alg.~\ref{fig:synthp} and will be explained below.
For the integration of \sift and \synth to be synergistic, however, a suitable extension of \sift is needed: the ability to handle a new type of feature that we call
\emph{mutex features}, as they will represent predicates $p$ of arity $k$ in which the last, $o_k$, argument is determined by the previous
ones $o_1, \ldots, o_{k-1}$. These features can be learned from action traces alone, very much as normal \sift features, with two differences spelled out below.

\subsection{Mutex Features}

A mutex feature $f=\tup{k,A,D}$ is a signed feature of arity $k$ where the action patterns $a[t]$ in $A$ are positive, i.e., add $f$-atoms,  and the
action patterns $b[t']$ in $D$ are negative, i.e., delete $f$-atoms. However, unlike the signed features that result from the consistency checks  in \sift
where the arity of the action patterns in both $A$ and $D$ is $k$; in  mutex features, the arity of the delete patterns $b[t']$ is $k-1$. The $k$-th argument
is implicit, under the assumption,  that has to be  verified, that the $k$-th argument of an $f$-atom is \emph{determined} by the other arguments.
Intuitively, while a \emph{consistent mutex feature}  $f=\tup{k,A,D}$ expresses that the atom $f(x_{t_1}, \ldots, x_{t_k})$ is a positive effect of the  action
$a(x)$ with $x=\tup{x_1, \ldots, x_n}$ for a positive pattern $a[t]$ in $A$; the atom $f(x'_{t'_1}, \ldots, x'_{t'_{k-1}},z_1)$ is a negative effect of the action
$b(x')$ with $x'=\tup{x'_1, \ldots, x'_m}$ for the negative pattern $b[t']$ in $D$. This effect makes use of a $z_1$ variable whose value is determined by the \stripsp
precondition $f(x_{t'_1}, \ldots, x_{t'_{k-1}},z_1)$.

\begin{definition}[Mutex Feature]
  A mutex feature $f=\tup{k,A,D}$ is a triplet given by a non-negative integer $k$, the arity of the feature,
  and  sets  of positive action patterns $a[t]$ in $A$ of arity $k$, and negative patterns $b[t']$ in $D$ of arity $k-1$. 
\end{definition}

For example, the mutex feature $f=\tup{1,A,D}$ with the positive action patterns of arity $1$, $A=\{\moveto[1]\}$,  and the negative action patterns of
arity $0$, $D=\{\moveto[]\}$, will capture the unary  predicate $\atC(x)$ which can only be true for a single constant $c$ in a state. Indeed, the feature expresses
that the action $\moveto(x)$ makes the atom $f(x)$ true and $f(z)$ false, provided the precondition $\atC(z)$ where $z$ is a determined variable.
If one can learn such features, then the action traces do not have to spell out as many arguments, and moreover, the resulting $z$ variables can be used to capture
implicit action arguments in other actions like $\pick(o)$ or $\drop()$, as we will see.

The key question is how the consistency of mutex features can be established given a set of extended traces. A slight extension of the ideas used in \sift
will provide the answer.

\medskip

Consider the graph $G_T$ associated with a set of extended traces $T$. The mutex feature $f=\tup{k,A,D}$ is consistent with the traces $T$ if two
conditions hold: 1)~every ground atom $f(o)$, $o=\tup{o_1, \ldots, o_k}$,  gets a single truth value in every node $n$ in $G_T$ following
the action patterns in $A$ and $D$, and 2)~for every pair of atoms $f(o)$ and $f(o')$ true in a node $n$, $o'=\tup{o'_1, \ldots, o'_k}$
with $o_k \not= o'_k$, it must be the case that $o_i \not= o'_i$ for some $i < k$. The second condition guarantees that in every node the value
of the argument at position $k$ of $f$ is determined by the precedent  arguments.\footnote{
A ground action $a(o)$ in  a node $n$ of an extended   trace makes a ground feature  $f(o')$ true in the following node $n'$,
for the mutex feature $f=\tup{k,A,D}$, if there is an action pattern $a[t]$ in $A$ such that $o'=t[o]$, where
$t[o]=\tup{o_{t_1},\ldots,o_{t_k}}$ for $t=[t_1,\ldots, t_k]$. From the well-formed assumption it follows that $f(o')$ must be false in the node $n$.
Likewise, a ground action $b(o)$  in  node $n$ makes the  ground feature $f(o')$ false in the following node $n'$,
if $f(o')$ is true in $n$, and there is an action pattern $b[t']$ in $D$ such that $t'[o]$ captures the first
$k-1$ arguments of $o'$.  Finally, if an edge from $n$ to $n'$ in $G_T$ is labeled with a ground action $c(o)$,
such that there are no $c[t]$ patterns in neither $A$ nor $D$,  the truth value of the ground atoms $f(o)$ in $n$
propagates to $n'$ and vice versa.} 

\begin{definition}[Consistent Mutex Feature]
  A mutex feature $f=\tup{k,A,D}$ is consistent over a set of extended traces $T$  if 1)~there is a consistent assignment of
  truth values to the ground atoms $f(o)$ affected by   ground actions in $T$ over the whole graph $G_T$,
  and 2)~for any two ground atoms $f(o)$ and $f(o')$ true in a node $n$ in $G_T$ with $o_k \neq o'_k$, it holds that $o_i \neq o'_i$ for some $i < k$. 
\end{definition}

Testing if a mutex feature $f=\tup{k,A,D}$ is consistent  with the traces $T$ can thus  be done efficiently
in time that is polynomial in the length of the traces as in \sift.

\subsection{Fully Observable Predicates}

\synthp learns  from \emph{partially observable state-action} traces of the form $\observation_0, a_1, \ldots, a_n,\observation_{n+1}$ where $a_i$ is a  \stripsp action,
and $\observation_i$ is a  \emph{partial  observation} of the hidden  state $s_i$ in the trace. We consider a crisp form of  partial state observability
where  a given  subset $\Pred^{\observation} \subseteq \Pred$ of predicate symbols $p$ in the domain $\Dom=\tup{\Pred,\A}$ is \emph{fully observable},
meaning that $\observation_i$ is given by all the $p$-atoms that are true in the hidden state $s_i$ for predicates $p \in \Pred^{\observation}$.

\begin{definition}[Full Predicate Observability]
  If a predicate $p \in \Pred$ in a \stripsp domain $\Dom=\tup{\Pred,\A}$ is \emph{fully observable}, then in a state $s$ of an instance of the domain,
  truth of all the ground $p$-atoms in $s$ is observed.
\end{definition}

Full predicate observability is thus different from full state observability as arises when there are no hidden domain predicates and they are all observable,
and also from a notion of local predicate observability that will be introduced below.

\subsection{\synthp and \siftp }

The \synthp algorithm is shown as  Algorithm~\ref{alg:synthp}. \synthp takes two arguments: the labeled  graph $G=G_T$ of the set of extended
traces over the hidden \stripsp domain, and the subset of fully observable predicates $\Pred^{\observation}$. If $\Pred^{\observation}$  is empty, \synthp learns solely
from action traces, and behaves like an extension of \sift, that we call \siftp. The nodes in the graph represent hidden states, and the edge labels
express  the ground actions that map one state into the next one. During the procedure,  other labels are added to the nodes and edges
like the truth values  of the learned and observed  $p$-atoms, and the new implicit action arguments $z_i^a$ and the queries $Q^i_a$ that
define their unique denotations for action $a$. In the inner loop, \synthp calls \sift iteratively  to invent new  predicates using action patterns defined
over the explicit action arguments, and the implicit action arguments found so far (initially none), and \synth  uses these predicates for defining
new implicit action arguments. The mutex and plain predicates invented by \sift correspond to the mutex and plain features that have been found
consistent with the traces in the graph. The consistent features, however, yield more than predicates: they also encode the action effects, through
the action patterns defining the features. The effects of the actions over the observed predicates in $\Pred^{\observation}$ are computed following
the simpler procedure in \synth, because such effects can be computed over state transitions and  not over  full trajectories. Yet,
\sift eventually ``rediscovers'' such predicates, as they also correspond to features consistent with the traces. The difference is that
observable predicates can be used from the beginning in query-preconditions to define implicit action arguments.
This difference will be explicit
below when analyzing the conditions on the hidden \stripsp domains under which \synthp and \siftp are complete.
The inner loop in \synthp can   reach a fixed point, but can also keep producing more implicit action arguments and more predicates.
This process can be stopped in two ways: using negative traces in training or using a bound $d_{\Max}$  on the number
of iterations. If $d_{\Max}$ is an upper bound on the length of the  longest paths in the \emph{dependency graph}  of the
the hidden domain (see below), the use of this bound does  not affect the completeness of \synthp.

\section{Properties}

The scope of \synthp, i.e., the class of \stripsp domains that it can learn correctly,
is given by the dependency graph of the domain and the set of predicates that are observable.

\subsection{\siftp Properties}

The dependency graphs have to be acyclic so that \synthp can learn the predicates one at a time.
In the absence of observable predicates, \synthp becomes \siftp, and the graph is defined as follows:

\begin{definition}
The directed  \emph{dependency graph}  $G_{\Dom}=(V,E)$ associated with a \stripsp domain $\Dom=\tup{\Pred,\A}$ with action schemas $\A$
is defined over vertices which represent the implicit  variables $z_i^a$, $i=0, \ldots, n_a$, for action schemas $a$, $a \in \A$,
and the predicates $p \in \Pred$. The edges of this graph are:

\begin{itemize}
	\item \textbf{Effects:} From  $p$ to $z_i^a$ if there is a $p$-effect of action $a$ that involves the variable $z_i^a$,
	\item \textbf{Preconditions:} From $z_i^a$ to $p$ if there is a $p$-atom in the subquery $Q^i(x,y,z^i)$ of action $a$
	\item  \textbf{Stratification:} From $z_j^a$ to $z_i^a$ if there is an precondition atom of $a$ that involves the variables $z_i^a$
	and $z_j^a$, $j > i$.
\end{itemize}
\end{definition}

A \stripsp  domain  $\Dom=\tup{\Pred,\A}$ is \emph{acyclic}  if the  graph $G_{\Dom}$ is acyclic; namely, it does
not contain a directed path from a node to itself.
The acyclic dependency graph of a couple of \stripsp domain encodings (Blocks and Delivery) is
shown in Figure~\ref{fig:dependency_all}.
If we let the \emph{rank} of an acyclic domain represent the length of the longest path in the dependency graph,
the conditions under which a \stripsp domain can be learned from action traces alone can be expressed as:
  
\begin{theorem}[\siftp]
  If the hidden  \stripsp domain $\Dom=\tup{\Pred,\A}$ is well-formed,  strongly  connected, and acyclic with rank bounded by $d_{\Max}$,
  then there is a set of extended traces  from which the algorithm \siftp will learn a domain equivalent to  $\Dom$.
\end{theorem}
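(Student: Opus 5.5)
We argue by induction along a topological order of the acyclic dependency graph $G_{\Dom}$, using the loop counter $d$ of \synthp (with $\Pred_{\observation}=\emptyset$, i.e., \siftp) as the induction index. For each node $v$ of $G_{\Dom}$, let its \emph{level} be the length of the longest path in $G_{\Dom}$ ending at $v$. Since the rank is at most $d_{\Max}$, every level is at most $d_{\Max}$.

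The inductive claim is the following. After iteration $d$, for every predicate $p$ of level at most $d$, \sift has produced a consistent (plain or mutex) feature equivalent to $p$ up to renaming. Moreover, for every implicit variable $z_i^a$ of level at most $d$, \synth has produced a query $Q^i_a$ whose denotation coincides with that of $z_i^a$ in every state of the traces, and $G'$ has been updated so that $z_i^a$ is available as an argument index for action patterns.

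For the trace construction, we use strong connectivity of $\Dom$: the reachable state graph of a suitable instance is connected, so we can take extended traces that visit every reachable state of a sufficiently large instance and apply every applicable ground action there, with identical states merged into single nodes. This is the same richness condition used in the completeness results for \sift and \synth, and we assume those results as given.

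The base case covers predicates whose effects mention only explicit arguments, which are exactly the predicates with no incoming edges from some $z_i^a$. For these, the true pattern set $B$ is among those enumerated by \sift. Its consistency with $G_T$ follows from well-formedness, as in \sift: the hidden truth values themselves provide a consistent assignment, and for a mutex predicate they also witness the functional condition 2) of consistent mutex features. Conversely, the \sift completeness argument shows that, on rich enough traces, any consistent feature is compatible with the hidden dynamics, so the learned features form a maximal description equivalent to the true one.

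For the inductive step, take $z_i^a$ at level $d+1$. All predicates appearing in its subquery $Q^i$, and all earlier $z_j^a$ (through the stratification edges), have level at most $d$ and are therefore already represented. Stratification guarantees that $Q^i$ determines $z_i^a$, so \synth's greedy Expand, run over the learned predicates, finds a determined query. By the \synth completeness result, any determined query it finds denotes the same object as $z_i^a$ in every trace state where $a$ is applied; this is where rich traces are needed. Once all implicit arguments below a predicate $p$ are available, its effect patterns are expressible, and the base-case argument applies verbatim to $p$ with the extended argument indices.

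The step I expect to be the main obstacle is that learned predicates are only \emph{equivalent up to renaming and maximal extension}, not identical to the hidden ones. I would handle it by showing that every consistent learned feature, evaluated on $G_T$, agrees with some hidden predicate (or a static or redundant combination) on all nodes. This lets queries built from learned features pick out exactly the same objects as the hidden queries. The same point rules out spurious determined queries: a learned query that is determined on the traces but denotes a different object would produce effects contradicting observed applicability somewhere in the exhaustive traces.

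Finally, after $d_{\Max}$ iterations every predicate and every implicit variable has been recovered. Preconditions are then learned exactly as in \sift over the explicit arguments together with the now-explicit $z$ arguments. The resulting domain accepts and rejects the same action traces as $\Dom$, which is what equivalence requires.
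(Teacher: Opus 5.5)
Your architecture matches the paper's: induction over the rank of the acyclic graph, one \sift-then-\synth round per loop iteration, strong connectivity to obtain rich traces, and $d_{\Max}$ iterations to exhaust all layers. The gap is in the query step.

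\textbf{The query step.} You assert two things. First, by \synth's completeness, any determined query found by Expand denotes the same object as the hidden $z_i^a$. Second, a query that is determined on the traces but denotes a different object would contradict observed applicability. Both claims are false.

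Expand returns the first determined conjunction in its level-by-level enumeration. That conjunction can pick out another implicit argument of $a$, or an object that is not a hidden argument at all. Such an extra determined argument is just a function of the state and contradicts nothing. Table~\ref{table:results} reports learned arguments with $|z\setminus x'|>0$ alongside 100\% verification.

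What the induction actually needs is two separate facts:
\begin{itemize}
\item \emph{Coverage.} \synth keeps adding determined variables until none with a new denotation remains. So once $Q^i$ is expressible over the learned predicates, the denotation of $z_i^a$ is among those found, though not necessarily under index $i$.
\item \emph{Soundness.} Every query that is valid and determined on the traces must be valid and determined in every state, of every instance, where $a$ applies. This is required because equivalence quantifies over traces of all instances.
\end{itemize}
The paper gets soundness from finiteness. With bounded rank, only finitely many candidate features and queries are ever tested, so a finite set of traces, possibly from several instances, refutes every spurious one. Exhaustive traces of a single ``sufficiently large'' instance do not obviously achieve this.

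\textbf{Two smaller points.}

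First, the paper uses strong connectivity for something more specific than visiting every state. For every node and atom there is a path into the node through an action that adds or deletes the atom. By well-formedness, the truth value of every invented atom is then fixed at every node. \sift's consistency test only guarantees that \emph{some} assignment exists, and queries over invented predicates cannot be evaluated unless that assignment is unique. This is exactly what your ``agrees on all nodes'' step needs. Your exhaustive traces supply it, but the step has to be derived explicitly.

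Second, the edges of $G_{\Dom}$ point from a node to what it depends on: $p\to z_i^a$ for effects and $z_i^a\to p$ for preconditions. So the level should be the longest path \emph{starting} at $v$. With your definition, $f_1$ in Figure~\ref{fig:dependency_blocks} gets the highest level instead of the lowest. Relatedly, for mutex predicates your base case should require only the first $k-1$ arguments of delete effects to be available. That is why $\holding$ is a sink even though $\drop$ deletes it via $z_1^{\sdrop}$.
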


\noindent In this definition, a  domain $\Dom$ is said to be  strongly  connected if in the instances of $\Dom$, the
states $s$ which are reachable from the initial state $s_0$, can reach $s_0$ back. The condition of acyclicity is needed
so that the predicates can be learned one at a time, while the strong connectedness ensures that there are traces
that can test the consistency of any feature.\footnote{These are all  sufficient conditions for the proof to hold,
not necessary conditions. In the experiments below, the algorithm learns domains 
that are  not  strongly connected, and $d_{\Max}$ can be set to infinity
as,   at some point,  no new predicates or implicit  arguments are found.}
The proof follows the one for the independent  theorem below.

\subsection{\synthp Properties}

The difference between \siftp  and \synthp is that the latter includes an initial non-empty subset of predicates $\Pred' \subseteq \Pred$
  that are fully observable. This extra knowledge extends the conditions under which a domain is learned, and this is  reflected
  in a  reduced dependency domain graph $G^{\observation}_{\Dom}$:
  
\begin{definition}
  The directed  dependency graph $G^{\observation}_{\Dom}=(V',E')$  associated with a \stripsp domain $\Dom=\tup{\Pred,\A}$ with predicates $\Pred$ and action schemas $\A$,
  when a subset $\Pred^{\observation} \subseteq \Pred$ of the predicates is observed, is the directed graph $G_{\Dom}=(V,E)$ for the domain $\Dom$, with the nodes
  for observed predicates  $p$ excluded. Namely, $V'$ is given by the $z_i^a$ variables for the action schemas $a \in \A$, and the non-observed
  predicates, and the edges $E'$  are those from $E$ involving such vertices only.
\end{definition}

Notice that if no predicate is observed, $\Pred'=\emptyset$, then $G^{\observation}_{\Dom}=G_{\Dom}$, while  if all predicates are observed,
$G_{\Dom}^{\observation} \not= G_{\Dom}$, and the \emph{acyclicity} of  $G_{\Dom}^{\observation}$ corresponds exactly to the conditions under which the \stripsp domain ${\Dom}$ is \emph{stratified.}
Indeed,  if a predicate $p$ in the  hidden domain $\Dom$ is fully observable, it can be used in preconditions to produce implicit action arguments
without having to be  learned. Actually, the effects on observable predicates can be determined later, without   inducing
dependencies in the graph $G^{\observation}_{\Dom}$.

\begin{theorem}[\synthp]
  Let $\Dom=\tup{\Pred,\A}$ be a well-formed and strongly connected  hidden \stripsp domain, and let $\Pred' \subseteq \Pred$
  be the subset of observable predicates. Then, if the resulting  dependency graph $G_{\Dom}^{\observation}$ is acyclic and with rank bounded by
  $d_{\Max}$,   the algorithm \synthp will learn a domain equivalent to $\Dom$.\label{thm:synthpcorrectandcomplete}
\end{theorem}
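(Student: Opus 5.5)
The argument goes by induction along a topological order of the acyclic graph $G^{\observation}_{\Dom}$. We show that after iteration $d$ of the main loop of \synthp, every vertex of $G^{\observation}_{\Dom}$ at depth at most $d$ has been ``recovered''. For a non-observed predicate $p$, this means \sift has produced a consistent (plain or mutex) feature equivalent to $p$ up to renaming. For an implicit variable $z_i^a$, it means \synth has produced a subquery $Q^i_a$ whose unique denotation coincides with that of the hidden $z_i^a$ in every node of $G_T$ where $a$ is applied. Depth is measured by longest incoming paths.

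Since the rank is bounded by $d_{\Max}$, all vertices are recovered before the loop stops. The observed predicates in $\Pred^{\observation}$ need no recovery: they are available from the start in $\Pred$, which is why they can be removed from the graph.

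For the base case, take a source $z_i^a$. All atoms of its subquery are over observed predicates, or over earlier $z_j^a$ that are sources too by the stratification edges. The completeness of \synth's greedy expansion (Algorithm~\ref{alg:find_expansion}) then finds a determining query. For a source predicate $p$, all of its effects involve only explicit arguments, so the original \sift completeness result applies and yields $p$.

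The inductive step splits into two cases.

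\begin{itemize}
\item \emph{Predicates.} Let $p$ be a non-observed predicate all of whose predecessors $z_i^a$ (the effect edges) have been recovered. Then every $p$-effect in $\Dom$ is expressible as an action pattern over explicit arguments and already-recovered implicit arguments, and $G'$ has been updated with those arguments. If $p$ is mutex-like, its delete effects use a determined last argument, so the corresponding mutex feature is a candidate; otherwise the plain feature is. This feature is consistent with the traces because the true truth values of $p$ witness condition 1) of the consistency definition, and condition 2) holds by the mutex property in the hidden domain.
\item \emph{Implicit variables.} Let $z_i^a$ have all its predecessor predicates and earlier $z_j^a$ recovered. Its hidden subquery $Q^i$ is then expressible over $\Pred$ as currently available, and the completeness of \synth guarantees that some determining expansion is found.
\end{itemize}

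The main obstacle is soundness in the other direction. \sift may return spurious consistent features, and \synth may return a query different from the hidden one. We must show that, for a sufficiently rich trace set, any consistent feature is a genuine predicate of the maximal domain description $\Dom_{max}$, and any determining query denotes the same object as the hidden $z_i^a$. Here strong connectedness is used. Following the \sift argument, we build extended traces that return to $s_0$ and thus close cycles in $G_T$. These cycles refute every feature whose effect patterns disagree with a real predicate, and they expose in some node every state in which a wrong query would be non-determined or denote a different object. Because there are only finitely many candidate patterns and queries up to arity $\Max$, finitely many traces suffice.

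The learned model then has the same effects as $\Dom$ up to renaming, with extra predicates being either harmless or redundant. Preconditions are read off the traces as in \sift and \synth, and observed-predicate effects are obtained transition-wise. By the alternative characterization via maximal domain descriptions, $\Dom_T$ is therefore equivalent to $\Dom$.
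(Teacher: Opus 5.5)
Your overall scheme matches the paper's sketch: recover the vertices of $G^{\observation}_{\Dom}$ level by level, with \sift supplying predicates and \synth supplying queries, and use finiteness of the candidate features and queries to get a finite sufficient trace set. (The paper's edges point from a vertex to what it depends on, so your sources are its sinks; this is harmless.)

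The gap is in the inductive step for implicit variables, where you invoke the completeness of \synth over learned predicates as if they were observed. \synth's validity and determination tests need the extension of each predicate in every node of $G_T$ where $a$ is applied. For an invented feature, the traces force the truth value of a ground atom in a node only if that atom is added or deleted somewhere on the traces through that node; edges that do not affect it merely propagate its value. What you establish is consistency of the feature, witnessed by the hidden valuation. That gives existence of a valuation, not uniqueness. Your notion of a ``recovered'' predicate (a feature equal to $p$ up to renaming) likewise says nothing about node valuations. With undetermined atoms, the tests can neither certify that the hidden subquery is valid and determined (an unknown atom may supply the needed binding or a competing one) nor reliably reject wrong queries. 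This is exactly where the paper uses strong connectedness. The traces can be extended so that, for every state and atom, some path reaching that state adds or deletes the atom. Hence all atoms that may satisfy a query are known and agree with the hidden state. You use strong connectedness only to refute spurious candidates, and even that use silently presupposes this fact.

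Two smaller points. First, your soundness goal, that every determining query denotes the same object as a hidden $z_i^a$, is too strong and false in general. \synthp may learn determined arguments that do not occur in the hidden domain (the $|z\setminus x'|$ column of Table~\ref{table:results}). What is needed, and suffices for equivalence, is that every learned query has a unique satisfying binding in every reachable state where the action is applicable in $\Dom$; then the extra arguments never block an applicable action. Second, if every implicit variable occurring in a $p$-effect had to be recovered before $p$, plain features would always suffice. Mutex features are needed exactly when a delete of $p$ uses a variable defined by a $p$-atom of the same precondition. Under the literal definition of $G_{\Dom}$, that creates a two-cycle $p \to z_i^a \to p$. Such edges must be dropped, as the paper's figures implicitly do. Your predicate step should adopt that convention explicitly, since as written it is inconsistent with the mutex case it is meant to cover.
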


\begin{proof}[Proof-sketch]
Since the domain has an acyclic dependency graph, the queries can be learned one at a time.
As the hidden domain is assumed to be strongly connected, the trace can be extended such that, for every state and atom, there is a path reaching the state that adds or deletes the atom, and consequently, all atoms that may satisfy a query are known.
In iteration $i$, all queries of rank $i$ are learned, since their arguments are observed and the dependent predicates must be consistent with the traces.
After $d_{\Max}$ iterations, every query is recovered.
Because $d_{\Max}$ is finite and each iteration considers only finitely many queries, a finite set of traces suffices to rule out all unsatisfiable queries,
so the learned domain contains only queries and predicates consistent with the hidden domain.
\end{proof}

\subsection{Local Predicate Observability}

Full observability of a predicate $p$   assumes  that the learner has access to all the ground $p$-atoms  that  true in a state.
If the learner is the agent that is acting in the world, this is not a realistic assumption, and fortunately it is not necessary either.
For example, an agent moving in a grid may observe the true atoms $\atC(x)$ in the state,
but only the atoms $\adjacent(x,x')$  involving the single  value of $x$ for which $\atC(x)$ is true.
We refer to this alternative form of observability as \emph{local observability} and define it in a domain-independent way as follows:

\begin{definition}[Local objects and atoms]
  The set of \emph{local objects}  in a state $s$ over a \stripsp domain $\Dom=\tup{\Pred,\A}$ refers to the set of  objects appearing as explicit or implicit arguments of the
  ground actions \emph{applicable} in $s$. The \emph{local atoms} in $s$ are the ground atoms that involve a local object.
\end{definition}

\noindent While a \emph{fully-observable} predicate $p$, reveals the set of all true  $p$-atoms  in  a state $s$, a \emph{locally observable} predicate $p$  reveals
the set of all true $p$-atoms  which  are local in $s$:

\begin{definition}[Local Observability]
  If a predicate $p \in \Pred$ in a \stripsp domain $\Dom=\tup{\Pred,\A}$ is \emph{locally observable}, then in a state $s$ of an instance of the domain,
   the   truth of all the ground $p$-atoms which are local in $s$ is known.\label{def:localobspred}
\end{definition}

If $p(c,d)$ is a ground atom in an instance of a domain $\Dom$ where $p$ is locally observable, then the truth of $p(c,d)$ is assumed to be known in a true hidden state $s$ iff it is a local atom in $s$. If $p$ is a fully observed predicate, on the other hand, there is no distinction between local and non-local $p$-atoms.
The interesting point is that \synthp  preserves completeness even when some of the fully observed predicates in $\Pred^{\observation} \subseteq \Pred$ become \emph{locally observable}:

\begin{definition}
  A \emph{key} predicate $p$ in a \stripsp domain $\Dom=\tup{\Pred,\A}$ is a predicate that appears in an atom of a lifted action precondition and which involves
  no explicit action arguments $x_i$, and a single implicit action argument $z_i$.
\end{definition}

For example, in the \emph{n-puzzle}, the action \emph{Right} can  have  no explicit argument  in a  \stripsp encoding with
preconditions $\atB(z_1)$, $\Right(z_2,z_1)$, $\atT(z_3,z_2)$. In such a domain, the predicate $\atB$ is a key predicate as the precondition atom  $\atB(z_1)$
involves no explicit argument and a single $z$-argument. The  predicates $\Right$ and $\atT$ are no key predicates as they involve a pair of $z$-variables.

We call the resulting algorithm  \localsynthp. \localsynthp differs from \synthp in two minor ways. When testing the \emph{validity} of a precondition and
the \emph{determination} of a query $Q^{i+1}(x,y,z^{i+1})$ within the \textsc{test*} procedure shown in the query-expansion procedure in Alg.~\ref{alg:find_expansion},
special care is taken of atoms $p(o)$ for locally observable predicates $p$ that are not local in a state. The truth of such atoms is  unknown in $s$,
yet they can be assumed to be \emph{false} for  testing  validity of a precondition, and \emph{true} for testing whether a query ensures that $z^{i+1}$ binds to a unique
object  in $s$.

\begin{theorem}[\localsynthp]
  Let $\Dom=\tup{\Pred,\A}$ be a well-formed and strongly connected  hidden \stripsp domain, and let $\Pred^{\observation} \subseteq \Pred$
  be a  subset of predicates such that the \emph{key predicates} in $\Pred^{\observation}$ are fully observable, and the others are \emph{locally observable}.
  Then, if the  dependency graph $G_{\Dom}^{\observation}$ formed by  assuming  that all predicates $\Pred^{\observation}$ are fully observable, is acyclic and of rank bounded by $d_{\Max}$,
  the algorithm \localsynthp  will learn a domain equivalent to  $\Dom$.
\end{theorem}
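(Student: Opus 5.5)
The plan is to reduce the theorem to Theorem~\ref{thm:synthpcorrectandcomplete}. I will show that, on the traces used there, every run of \localsynthp makes exactly the same decisions as a run of \synthp in which all predicates in $\Pred^{\observation}$ are fully observable. By hypothesis the dependency graph $G^{\observation}_{\Dom}$ built under full observability of $\Pred^{\observation}$ is acyclic with rank at most $d_{\Max}$. So if every \textsc{test*} call returns the same verdict (\emph{valid}, \emph{determined}, or neither) in both settings, both algorithms learn the same queries $Q^i_a$, the same implicit arguments $z_i^a$, and the same \sift features. The conclusion then follows from the earlier theorem. The \sift part never consults observations, since it works from the graph $G_T$ alone. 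The only place local observability matters is therefore the query expansion of Alg.~\ref{alg:find_expansion}, together with computing effects and preconditions over observed predicates.

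First I fix the class of candidate atoms. Each candidate extension $q''$ contains only atoms over $x$, $y$ and $z^{i+1}$. I argue by induction on $i$ that, for queries that are correct subqueries of the hidden domain, the objects bound to $x$ and to the already-learned $z_j$ in a state $s$ where $a(o)$ is applied are local in $s$. This holds because they are explicit or implicit arguments of an action applicable in $s$, which is exactly the definition of local objects. Hence every ground instance of an atom in a true subquery that mentions some $x$ or some earlier $z_j$ has a known truth value.

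The remaining atoms involve no explicit argument and only the new variable $z_{i+1}$ (plus possibly $y$'s). These are the key-predicate atoms, and by hypothesis they are fully observed. Atoms linking $z_{i+1}$ to an earlier local $z_j$ or $x$ are themselves local.

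Next I handle the two asymmetric conventions in \textsc{test*}. For \emph{validity}, unknown non-local atoms are treated as false. This can only make the test stricter, so no candidate is accepted that full observability would reject. Conversely, for any correct hidden subquery, the satisfying grounding in $s$ binds every variable that occurs in a non-key atom to a local object: $z_{i+1}$ is itself an implicit argument of the applicable action $a(o)$, and each $y$ occurs once, next to a local object, so that atom is local. Hence correct queries remain valid. For \emph{determination}, unknown atoms are treated as true. This can only enlarge the set of satisfying bindings, so a query declared determined is truly determined. Conversely, the hidden determining subquery determines $z_{i+1}$ even under this pessimistic reading. The reason is that every binding it admits must satisfy the fully observed key atom on $z_{i+1}$, or, if $z_{i+1}$ is reached only through local atoms from earlier local objects, those atoms are known exactly. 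So the greedy search still finds a determining extension in the same round.

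The main obstacle is this last step. I must exclude the case where a correct query's only constraint on $z_{i+1}$ comes from a non-key atom whose other argument is a $y$, or an object that is not local; treating such unknown atoms as true would then break determination. I will try to rule this out using single occurrence of $y$ variables and stratification: any atom containing $z_{i+1}$ either contains $x$ or an earlier $z_j$, hence is local, or contains only $z_{i+1}$ and $y$'s, hence is key and fully observed. Finally, the effects on locally observed predicates are learned as in \synth from transitions. Atoms changed by an action involve its arguments, so they are local before and after the transition. Strong connectedness supplies traces witnessing every effect and precondition, which completes the reduction.
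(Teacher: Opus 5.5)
Your route is the paper's. Its entire justification is that the modified \textsc{test*} (non-local unknown atoms read as false for validity, as true for determination) keeps the hidden subqueries valid and determined and never admits a query that full observability would reject, so Theorem~\ref{thm:synthpcorrectandcomplete} carries over. Your case split for atoms of a true subquery containing $z_{i+1}$ is a correct and more explicit version of this: by stratification and the definition of key predicates, such an atom either mentions $x$ or an earlier $z_j$, and is then local, or mentions only $z_{i+1}$ and $y$'s, and is then key and fully observed. So your ``main obstacle'' is resolved.

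The framing of the reduction, however, does not hold. You assert that every \textsc{test*} call returns the same verdict in both settings and that both runs learn the same queries, but you prove only one-sided inclusions for arbitrary candidates, and the two-sided claim is false. A candidate such as $q(z_{i+1})$, over a locally observed non-key predicate $q$, can be determined under full observability yet undetermined once non-local $q$-atoms are read as true. The greedy search can then return a different (still correct) query and hand different implicit arguments to \sift. The conclusion must therefore come from re-running the argument of Theorem~\ref{thm:synthpcorrectandcomplete} on the local run:
\begin{itemize}
\item soundness, since validity with unknowns false and uniqueness with unknowns true imply true validity and uniqueness with the same binding;
\item completeness, since the hidden subqueries and their intermediate conjunctions survive.
\end{itemize}
Likewise, your claim that already-learned $z_j$ bind to local objects holds only for those denoting hidden implicit arguments. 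Extra learned arguments need not be local; this is harmless only because hidden subqueries never mention them.

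The genuine error is in your last step. ``Atoms changed by an action involve its arguments, so they are local before and after the transition'' fails for ``after''. Locality in the successor $s'$ is defined by the actions applicable in $s'$, and the arguments of $a(o)$ need not occur in any of them. Consider this domain:
\begin{itemize}
\item $\mathit{process}(x)$ with precondition $\mathit{todo}(x) \land \mathit{modeA}(z_1)$ and effects $\neg\mathit{todo}(x)$, $\mathit{done}(x)$;
\item $\mathit{restore}(x)$ with precondition $\mathit{done}(x) \land \mathit{modeB}(z_1)$ and the reverse effects;
\item two mode-switching actions whose only (implicit) argument is the mode object.
\end{itemize}
This domain is well-formed, strongly connected, and meets the hypotheses: $\mathit{modeA}$ and $\mathit{modeB}$ are key, while $\mathit{todo}$ and $\mathit{done}$ are only locally observed. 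Yet no successor of $\mathit{process}(x)$ or $\mathit{restore}(x)$ makes $x$ local. Transition-based learning therefore never sees their effects on $\mathit{todo}$ and $\mathit{done}$, and strong connectedness does not help. The paper's sketch is silent on effects over locally observed predicates. A workable repair is to take these effects from the \sift features, which, as the paper notes, rediscover such predicates once the needed implicit arguments are available, rather than from observations of $s'$.
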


The theorem is a result of the definition of local observability and the change to the \textsc{test*} procedure, which ensures that non-observed atoms in the state do not affect the relevant queries or introduce additional invalid queries.


\section{Examples}

We look at how \siftp and \synthp learn some  domains.

\subsection{Examples: \siftp}

We consider  two domains which are fully learned without observing any predicate, whose  dependency graphs are shown in Figure~\ref{fig:dependency_all}.

\begin{figure}[t]
	\centering
	
	\begin{subfigure}[t]{0.48\columnwidth}
		\centering
		\begin{tikzpicture}[
			scale=0.9,
			node distance=1.5cm,
			every node/.style={
				draw,
				minimum size=.6cm,
				inner sep=0pt
			}
			]
			
			\node[rectangle, draw, rounded corners] (A) at (0,0) {$f_1$};
			\node[circle, draw] (B)  at (1,-.5) {$z_1^{\sdrop}$};
			\node[circle, draw] (C)  at (1,.5) {$z_1^{\sstack}$};
			\node[rectangle, draw, rounded corners] (D)  at (2,.5) {$f_2$};		
			\node[circle, draw] (E)  at (3,0.5) {$z_1^{\sunstack}$};
			
			\draw[-Stealth] (B) -- (A);
			\draw[-Stealth] (C) -- (A);
			\draw[-Stealth] (D) -- (C);
			\draw[-Stealth] (E) -- (D);
		\end{tikzpicture}
		\caption{\emph{Blocksworld}}
		\label{fig:dependency_blocks}
	\end{subfigure}
	\hfill
	\begin{subfigure}[t]{0.48\columnwidth}
		\centering
		\begin{tikzpicture}[
			scale=0.9,
			node distance=1.5cm,
			every node/.style={
				draw,
				minimum size=.6cm,
				inner sep=0pt
			},znode/.style={
				circle, draw
			},fnode/.style={
				rectangle, draw, rounded corners
			}
			]
			
			\node[fnode] (B) at (1,1) {$f'_1$};
			\node[fnode] (A) at (2,1) {$f'_2$};		
			\node[znode] (C) at (3,0) {$z_1^{\sdrop}$};
			\node[znode] (D) at (1,0) {$z_1^{\smove}$};
			\node[znode] (E) at (0,0) {$z_1^{\spick}$};
			\node[znode] (F) at (2,0){$z_2^{\sdrop}$};
			
			\draw[-Stealth] (C) -- (A);
			\draw[-Stealth] (D) -- (B);
			\draw[-Stealth] (E) -- (B);
			\draw[-Stealth] (F) -- (C);
			\draw[-Stealth] (F) -- (B);
		\end{tikzpicture}
		
		\caption{\emph{Delivery}}
		\label{fig:dependency_delivery}
	\end{subfigure}
	
	\caption{Dependency graphs for two domains}
	\label{fig:dependency_all}
	
\end{figure}

\medskip

\noindent \textbf{Blocksworld:}
The \strips actions $\ustack(x_1,x_2)$, $\uunstack(x_1,x_2)$, $\upick(x_1)$, and $\udrop(x_1)$
are  reduced automatically  to \stripsp actions of lower arity: $\stack(x_2^s)$, $\unstack(x_1^u)$, $\pick(x_1^p)$, $\drop()$.
All the action arguments dropped  are  recovered by \siftp,  without observing the  states,  using mutex features.
The acyclic dependency between the features and $z$-variables can be seen in Figure \ref{fig:dependency_blocks}.
First, the mutex feature
$$f_{1}=\tup{1,\{\unstack[1], \pick[1]\},\{\stack[\ ], \drop[\ ]\}} \text{,}$$ is learned, which captures the predicate $\holding$. Using preconditions $Q^{\spick}_1 = f_1(z_1^{\sdrop})$ and $Q^{\sstack}_1 = f_1(z_1^{\sstack})$, the block that is currently held can be inferred, resulting in actions $\drop(z_1^{\sdrop}), \stack(x_2^{\sstack},z_1^{\sstack})$.
Only afterward, the mutex feature
$$f_{2}=\tup{2,\{\stack[2,1]\},\{\unstack[1]\}}$$
is learned, which captures the predicate $on$. Using $Q^{\sunstack}_1 = f_2(x_1^{\sunstack}, z_1^{\sunstack})$, the block below $z_1^{\sunstack}$ is inferred, resulting in action $\unstack(x_1^{\sunstack},  z_1^{\sunstack})$.
After this, no further arguments can be derived using referring expressions over mutex features.

\medskip

\noindent \textbf{Delivery:}
Consider \emph{delivery}, a domain with multiple agents and packages located on a grid of cells $c$.
In \strips, this domain can be described using an action
$\umove(x_1^{\smove}, x_2^{\smove}, x_3^{\smove})$, where agent $x_1^{\smove}$ moves from $x_2^{\smove}$ to $x_3^{\smove}$, and actions
$\upick(x_1^{\spick}, x_2^{\spick}, x_3^{\spick})$, where agent $x_1^{\spick}$ picks package $x_2^{\spick}$ from cell $x_3^{\spick}$, and similarly
$\udrop(x_1^{\sdrop}, x_2^{\sdrop}, x_3^{\sdrop})$.
In \stripsp, the explicit actions can be reduced to
$\move(x_1^{\smove}, x_3^{\smove})$, $\pick(x_1^{\spick}, x_2^{\spick})$, and $\drop(x_2^{\sdrop})$.
All implicit arguments can be recovered using independent mutex features, as shown in the dependency graph in Figure~\ref{fig:dependency_delivery}.
Using these action schemas, the mutex feature
\[
f_1' = \tup{2,\{\move[1,2]\},\{\move[1]\}}
\]
can be learned, capturing the \strips predicate $at(x_1,x_2)$.
With the preconditions
$Q_{\smove}^1 = f_1'(x_1^{\smove}, z_1^{\smove})$ and
$Q_{\spick}^1 = f_1'(x_1^{\spick}, z_1^{\spick})$,
additional arguments can be inferred, yielding the actions
$\move(x_1^{\smove}, x_3^{\smove}, z_1^{\smove})$ and
$\pick(x_1^{\spick}, x_2^{\spick}, z_1^{\spick})$,
which now include the current position of the agent.
Even without exploiting these newly inferred arguments, the mutex feature
\[
f_2' = \tup{2,\{\pick[2,1]\},\{\drop[1]\}}
\]
can be learned, capturing the \strips predicate $\holding(p,a)$.
Using the precondition
$Q^{\sdrop}_1 = f_2'(x_2^{\sdrop}, z_1^{\sdrop})$,
we obtain the action $\drop(x_2^{\sdrop}, z_1^{\sdrop})$, where the agent can be inferred from the package it is holding.
Only after the agent that is holding the package is known, its position can be recovered using the precondition
$Q_{\sdrop}^2 = f_1'(z_1^{\sdrop}, z_2^{\sdrop})$,
yielding the full action
$\drop(x_2^{\sdrop}, z_1^{\sdrop}, z_2^{\sdrop})$.
At this point, no further preconditions over mutex features yield new arguments, and the resulting action schemas are equivalent to the original \strips action schemas.


\begin{table*}[t]\setlength{\tabcolsep}{1.9pt}\centering
	\begin{tabular}{lccccccc|ccrcrcr|crrr|cc}
		\multicolumn{8}{c}{Data} & \multicolumn{7}{|c}{\siftp (top) / \synthp (bottom)} & \multicolumn{4}{|c}{Verification} & \multicolumn{2}{|c}{\synth} \\
		Domain      &$\Pred^f$&$\Pred^s$& $\textrm{O}$ & $\textrm{L}$  & $|x'|$ &$\Pred_{\observation}^f$&$\Pred_{\observation}^s$& $|z\!\cap\! x'|$ & $|z \!\setminus\! x'|$ &$F^m$&$F^m_c$&$F$&$F_c$&$\textrm{T}_\textrm{L}$& $\textrm{O}_\textrm{V}$ & $\textrm{S}_\textrm{V}$\ & $\textrm{T}_\textrm{V}$ & $\%\textrm{V}$ & $|z\!\cap\! x'|$ & $|z \!\setminus\! x'|$ \\
		\midrule
		blocks3     &$3$&$1$&$  6$&$  1k$&$ 7$&$0$&$0$&$ 2$&$ 0$&$     66$&$  2$&$    5$&$  4$&$    8\seconds$&$  7$&$ 1k$&$  101\seconds$&$100\%$&$2$           &$                     0$ \\
		blocks4     &$5$&$0$&$  7$&$  1k$&$ 6$&$0$&$0$&$ 3$&$ 0$&$     21$&$  3$&$   12$&$  8$&$    5\seconds$&$  8$&$ 1k$&$   68\seconds$&$100\%$&$3$           &$                     0$ \\
		delivery    &$3$&$1$&$ 13$&$  7k$&$ 9$&$0$&$0$&$ 4$&$ 0$&$    101$&$  5$&$    7$&$  5$&$   22\seconds$&$ 14$&$ 5k$&$  313\seconds$&$100\%$&$4$           &$                     0$ \\
		driverlog   &$4$&$2$&$ 11$&$ 61k$&$19$&$0$&$0$&$ 9$&$ 2$&$1505800$&$ 16$&$   49$&$ 23$&$111620\seconds$&$13$&$41k$&$20198\seconds$&$100\%$&$9$           &$                     0$ \\
		ferry       &$4$&$1$&$ 10$&$  2k$&$ 6$&$0$&$0$&$ 4$&$ 0$&$     27$&$  3$&$    6$&$  4$&$    4\seconds$&$ 11$&$ 2k$&$   64\seconds$&$100\%$&$4$           &$                     0$ \\
		grid        &$6$&$6$&$ 13$&$ 13k$&$13$&$0$&$0$&$ 8$&$ 0$&$    769$&$  7$&$   11$&$  7$&$  109\seconds$&$ 17$&$ 9k$&$ 1415\seconds$&$100\%$&$9$           &$                     4$ \\
		gripper     &$4$&$4$&$ 11$&$  2k$&$ 8$&$0$&$0$&$ 4$&$ 2$&$    414$&$ 28$&$   14$&$ 14$&$   18\seconds$&$ 12$&$ 2k$&$  203\seconds$&$100\%$&$5$           &$                     3$ \\
		gripper4    &$4$&$4$&$ 13$&$  2k$&$ 8$&$0$&$0$&$ 4$&$ 0$&$     66$&$  5$&$    5$&$  5$&$    9\seconds$&$ 14$&$ 2k$&$  113\seconds$&$100\%$&$4$           &$                     0$ \\
		hanoi       &$2$&$1$&$  6$&$  1k$&$ 3$&$0$&$0$&$ 1$&$ 0$&$     29$&$  2$&$    3$&$  3$&$    8\seconds$&$  7$&$ 1k$&$  140\seconds$&$100\%$&$1$           &$                     0$ \\
		logistics   &$2$&$8$&$ 14$&$ 23k$&$13$&$0$&$0$&$ 5$&$ 0$&$    561$&$  9$&$    7$&$  7$&$   98\seconds$&$ 19$&$21k$&$10028\seconds$&$100\%$&$6$           &$                     7$ \\
		miconic     &$3$&$3$&$  7$&$  1k$&$ 8$&$0$&$0$&$ 4$&$ 2$&$    102$&$  9$&$    8$&$  8$&$    2\seconds$&$ 11$&$ 1k$&$   55\seconds$&$100\%$&$6$           &$                     0$ \\
		$n$-puzzle  &$2$&$3$&$ 10$&$  6k$&$16$&$0$&$0$&$ 8$&$ 0$&$   5006$&$ 41$&$   36$&$ 22$&$  152\seconds$&$ 23$&$10k$&$ 6082\seconds$&$100\%$&$16$          &$                     0$ \\
		$c$-puzzle  &$2$&$4$&$ 17$&$  1k$&$12$&$0$&$0$&$ 4$&$ 0$&$    791$&$  3$&$   16$&$  2$&$   39\seconds$&$ 31$&$ 1k$&$ 1153\seconds$&$100\%$&$12$          &$                     0$ \\
		sokoban     &$2$&$2$&$ 15$&$  5k$&$ 5$&$0$&$0$&$ 2$&$ 0$&$     61$&$  1$&$    7$&$  3$&$  148\seconds$&$ 23$&$ 5k$&$ 3803\seconds$&$100\%$&$3$           &$                     0$ \\
		\midrule\midrule
		sokoban     &$2$&$2$&$ 15$&$  6k$&$ 5$&$0$&$2$&$ 3$&$ 0$&$     61$&$  1$&$    7$&$  3$&$  453\seconds$&$ 24$&$ 5k$&$12331\seconds$&$100\%$&$3$           &$                     0$ \\
		logistics*  &$2$&$8$&$ 14$&$ 25k$&$13$&$0$&$1$&$ 6$&$ 4$&$   3833$&$ 16$&$   11$&$ 11$&$ 1385\seconds$&$ 18$&$21k$&$56344\seconds$&$100\%$&$6$           &$                     4$ \\
		s-delivery  &$4$&$2$&$ 19$&$  7k$&$12$&$1$&$2$&$11$&$ 0$&$    137$&$  3$&$    6$&$  4$&$  492\seconds$&$ 20$&$ 5k$&$12538\seconds$&$100\%$&$11$          &$                     0$\\

	\end{tabular}
	\caption{Results table.
		Table of results when learning a \stripsp domain $\mathcal{D}$ from \emph{extended} traces of size $\textrm{L}$ from an instance with $\textrm{O}$ objects of a \strips domain $\mathcal{D'}$ containing $|x'|$ action arguments, $\Pred^f$ \emph{non-static} and $\Pred^s$ \emph{static} predicates.
		Hereby $\Pred_{\observation}^f$ denotes the number of fluent predicates and $\Pred_{\observation}^s$ the number of static predicates observable to \synthp, always zero for the \siftp cases.
		$|z\!\cap\! x'|$ denotes the number of \strips arguments not explicit in the trace but recovered by the algorithms, and $|z \!\setminus\! x'|$ the number of additional learned arguments not contained in $\mathcal{D'}$, with \siftp / \synthp columns showing our results and \synth columns showing the corresponding numbers reported for \synth \protect\cite{synth}.
		$T_L$ is the learning time.
		$F$ plain features and $F^m$ mutex features were generated and tested for consistency, with $F_c$ and $F^m_c$ consistent features, respectively.
		Each learned domain was validated using 24 positive and 24 negative traces each of length $\textrm{S}_\textrm{V}$ sampled from instances containing $\#\textrm{O}_\textrm{V}$ objects. $\textrm{T}_\textrm{V}$ is the verification time, and $\%\textrm{V}$ the success rate.}
	\label{table:results}
\end{table*}

\subsection{Examples: \synthp}
The next examples illustrate that  \synthp strictly extends both  \synth and \siftp.

\medskip

\noindent \textbf{Delivery:} Consider the delivery definition from Equation~(\ref{eq:3}) with a single agent, where the agent's position ($at(x)$) and the adjacency relations ($leftof(x,y)/belowof(x,y)$) are observed, and only the action $pick(x_1)$ has an observed argument, namely the package that is picked.
The agent's position can be inferred for each action $a$ using the query $Q_a^1 = at(z_1)$, and for the move actions the next position can be inferred using the adjacency relations (e.g. $Q^2_{down} = belowof(z_1, z_2)$ for action $down$).
The mutex feature \[f_1 = \tup{1,\{pick[1]\},\{drop[]\}}\] infers the package currently held, and the query $Q^2_{drop} = f_1(z_2)$ then infers the package being dropped.
These queries recover all arguments in the hidden domain, so any fluent predicate \sift could learn from full \strips actions can now be learned. Since all fluent predicates are learned and all static predicates are observed, \synthp can infer every argument that \synth can infer from full states. By contrast, \synth cannot infer the package that is dropped since the predicate is missing and \siftp cannot infer the agent's position, so neither learns the correct domain.

\medskip

\noindent \textbf{Sokoban:} Let the predicates $connected$, $connected_2$ be observed, as well as actions $up(x_1), push(x_1)$, where $x_1$ denotes the next position of the agent.
The mutex feature \[f_1 = \tup{1,\{move[1], push[1]\},\{move[], push[]\}}\] infers the current position of the agent, and using the query $Q^1_{move} = Q^1_{push} = f_1(z_1)$ for both actions this position can be inferred.
Afterward, the query $Q^2_{move}=connected(x_1,z_2), connected_2(z_1,z_2)$ infers the cell to which a box is pushed.
Again all arguments are recovered and all predicates and queries can be learned. The examples are beyond the scope of both  \siftp and \synth.


\section{Experiments}
We evaluate \siftp and \synthp on a set of \strips domains under different observability assumptions about the actions and/or states in the traces. 



\medskip

\noindent \textbf{Experimental Setup:}
For each \strips domain, a corresponding \stripsp domain was constructed by removing arguments (and predicates) such that the algorithms could infer all missing arguments.
As input, six \emph{extended} traces over the same instance of these domains were generated.
The only exceptions are \textit{n-puzzle}, \textit{logistics} and \textit{logistics*}, for which traces of different instances were used.
We compare our results with those reported for \synth \cite{synth}, and for additional experiments not included in the paper, we used their publicly available implementation \cite{synth_implementation}.
All experiments were repeated 25 times and run on 10 CPU cores with a clock rate ranging from 2.1GHz to 2.9GHz,  using up to 450GB of memory.
All data and the implementation used are publicly available \cite{synthp_implementation}.

\medskip

\noindent \textbf{Domains:}
We evaluated \siftp on the same set of \emph{well-formed} planning domains used in \cite{sift}, where 
$n$-puzzle and $c$-puzzle refer to the sliding-tile puzzle with locations defined as coordinates and cells, respectively \cite{synth}. We additionally included a \emph{gripper} instance with four rooms and three grippers (\emph{gripper4}). \synthp was evaluated on three domains: \emph{sokoban}, the \emph{delivery} domain from Equation~\ref{eq:3}, and \emph{logistics*}, which is equivalent to \emph{logistics} except that cities can have more than one airport.

\medskip

\noindent \textbf{Translation to \stripsp :}
The \stripsp domains, from which action traces were generated to test \siftp, were generated automatically from the \strips domains
\cite{synth}.
For testing \synthp, some \stripsp action arguments were rendered unobservable, and some state predicates were chosen
to be observable instead.

\medskip

\noindent \textbf{Data:}
For each \stripsp domain $\Dom$, six \emph{extended} traces were sampled using breadth-first search, starting from a random reachable
state of an instance. For all domains but \textit{logistics}, \textit{logistics*}, and the  \textit{n-puzzle},  a single instance was used to sample the traces.
For efficiency reasons, in these three domains, multiple smaller instances were used instead.

\medskip

\noindent \textbf{Verification:}
For verification, we generate both positive and negative extended traces from an instance of the \strips domain that is different from the one used to
learn $\Dom$. Negative traces are traces in which the last action is not applicable because a precondition on a fluent predicate is false in the corresponding state.
Positive traces are traces in which all actions are applicable. Verification fails if in the learned model, a negative trace is found to be positive,
or a positive trace is found to be negative. A verification rate of 100\% means that all traces are classified correctly.

\medskip

\noindent \textbf{Analysis of \siftp:}
The results in the upper part of Table~\ref{table:results} show that \siftp is able to learn correct domains ($100\%$ verification) in all domains from action traces alone, even though these traces convey, on average, only $55\%$ of the action arguments contained in the original \strips domains.
This can be seen in the table where column $|x'|$ shows the number of action arguments in the \strips domain and $|z\cap x'|$ shows the number of arguments ``recovered'' by \siftp. For example, in \emph{ferry}, the traces recover 4 action arguments compared to 6 in the \strips domain, meaning that only 2 action arguments are explicit in the \stripsp domain. Similarly, in \emph{grid}, 8 arguments are recovered even though only 5 are provided explicitly in the traces. In both cases, \siftp successfully learns the correct domain.
In some domains, \synth recovers significantly more arguments than \siftp, for example in \emph{c-puzzle}, where \synth can learn from action traces with actions with no arguments, but uses state observability.

\medskip

\noindent \textbf{Analysis of \synthp:}
The bottom  part of Table~\ref{table:results} shows that \synthp learns a correct domain in every case, using the same number of explicit action arguments as \synth, i.e.,
those in the \stripsp encoding, but observing significantly fewer predicates.

In \emph{sokoban}, only the static predicates defining the grid are observed, and no  fluent predicate is observed.
Similarly, in \emph{logistics*}, only the static predicate encoding the relations between cities and locations is observed.
In \emph{delivery}, as in (\ref{eq:3}) above, only the fluent predicate $at(x_1)$ is  observed  along with  the static grid predicates,
whereas \synth uses all six fluent predicates.
In all these  cases, neither \synth nor \siftp can learn the domains provided with  the same inputs.



\section{Conclusions}

For  action model learning to become a practical, lifted alternative to model-based reinforcement learning,
it must be able to use natural traces  conveying  minimal information about actions and states, revealing
certain predicates only and no more action arguments than those which are necessary for selecting the actions.
Recent effective model learning algorithms like \sift and \synth  learn from  pure \strips action traces,
and from  full state-action traces in \stripsp respectively, but can't deal with full or local observability
of selected state predicates. We have shown, however, that a suitable extension of \sift for learning mutex
features, called \siftp, in combination with  \synth, can address such tasks effectively, while 
providing  the theoretical conditions under which the algorithms are complete. For this, 
the query-synthesis algorithm of \synth yields queries or referring expressions, which \sift uses
to invent new predicates, in a loop that  generates  more queries and more predicates.
While \siftp learns  from action traces alone in \stripsp, and \synthp from partially observable
state-action traces, the only difference between the two algorithms is that the latter starts
with a non-empty set of observable predicates. Algorithms that initially  appeared to be very different,
\sift and \synth, are thus combined and extended in \synthp.
Future work includes  effective methods for learning  static predicates and for using
partial observability of the successor  states as well to further reduce the number
of action arguments that need to be observed.

\section*{Acknowledgments}
{We thank Jonas Reiter and Jakob Gebler for insightful comments and helpful discussions.}
The research has been supported by the Alexander von Humboldt Foundation with funds from the German Federal Ministry for Education and Research.
This project has received funding from the European Research Council (ERC) under the European Union's Horizon 2020 research and innovation programme (Grant agreement No. 885107).
This project was also funded by the German Federal Ministry of Education and Research (BMBF) and the Ministry of Culture and Science of the German State of North Rhine-Westphalia (MKW) under the Excellence Strategy of the Federal Government and the L\"ander.

\section*{AI Declaration}
The authors employed generative‑AI tools for proof‑reading and minor re‑phrasing (spell‑checking, grammar correction and stylistic improvements of the manuscript text),
and also  for code assistance (creation of function skeletons, program templates,  and debugging). All AI‑generated code was subsequently reviewed,
tested, and adapted by the authors. The authors retain full responsibility for  all the contents.

\bibliographystyle{kr}
\bibliography{control}

\end{document}